\newcommand {\IE} {\ensuremath {\mathbb{E}}}
\newtheorem{dfn}{Definition}
\newtheorem{thm}{Theorem}
\newtheorem{crl}{Corollary}
\title{MCTS Based on Simple Regret}
\author {David Tolpin, Solomon Eyal Shimony \\
Department of Computer Science, \\
Ben-Gurion University of the Negev, Beer Sheva, Israel \\
\{tolpin,shimony\}@cs.bgu.ac.il}
\title{MCTS Based on Simple Regret}
\begin{document}

\maketitle

\begin{abstract}
UCT, a state-of-the art algorithm for Monte Carlo tree search (MCTS)
in games and Markov decision processes, is based on UCB, a sampling
policy for the Multi-armed Bandit problem (MAB) that 
minimizes the cumulative regret.  However, search differs from MAB in
that in MCTS it is usually only the final ``arm pull'' (the actual
move selection) that collects a reward, rather than all ``arm pulls''.
Therefore, it makes more sense to minimize the simple regret, as
opposed to the cumulative regret. We begin by introducing policies for
multi-armed bandits with lower finite-time and asymptotic simple
regret than UCB, using it to develop a two-stage scheme (SR+CR) for MCTS
which outperforms UCT empirically.

Optimizing the sampling process is itself a
metareasoning problem, a solution of which can use value of
information (VOI) techniques.  Although the theory of VOI for search
exists, applying it to MCTS is non-trivial, as typical myopic
assumptions fail. Lacking a complete working VOI theory for MCTS, we
nevertheless propose a sampling scheme that is ``aware'' of VOI,
achieving an algorithm that in empirical evaluation outperforms 
both UCT and the other proposed algorithms.
\end{abstract}

\section{Introduction}

Monte-Carlo tree search, and especially a version based on the
UCT formula \cite{Kocsis.uct} appears in numerous search applications,
such as \cite{GellyWang.mogo,Eyerich.ctp}. Although these methods are shown to be successful empirically,
most authors appear to be using the UCT formula ``because it has been shown
to be successful in the past'', and ``because it does a good job of
trading off exploration and exploitation''. While the latter statement may be
correct for the multi-armed bandit and for the UCB method \cite{Auer.ucb},
we argue that it is inappropriate for search. The problem is not that
UCT does not work; rather, a simple reconsideration from basic
principles can result in schemes that outperform UCT.

The core issue is that in adversarial search
and search in ``games against nature'' --- optimizing behavior under
uncertainty, the goal is typically to either find a good (or optimal)
strategy, or even just to find the best first action of such a policy. Once
such an action is discovered, it is usually not beneficial to further sample
that action, ``exploitation'' is thus meaningless for search
problems. Finding a good first action is closer to the pure
exploration variant, as seen in the selection problem
\cite{Bubeck.pure,TolpinShimony.blinkered}. In the selection problem,
it is much better to minimize the \emph{simple} regret.  However, the
simple and the cumulative regret cannot be minimized simultaneously;
moreover, \cite{Bubeck.pure} shows that in many cases the smaller the
cumulative regret, the greater the simple regret.

We begin with background definitions and related work. 
Some sampling schemes are introduced, and shown to have better
bounds for the simple regret on sets than UCB, the first contribution of this paper. 
The results are applied to sampling in trees by combining the proposed sampling schemes on the
first step of a rollout with UCT for the rest of the
rollout. An additional sampling scheme based on metareasoning 
principles is also suggested, another contribution
of this paper.
Finally, the performance of the
proposed sampling schemes is evaluated on sets of Bernoulli arms, in randomly
generated 2-level trees, and on the sailing domain, showing where the proposed schemes have improved
performance.

\section{Background and Related Work}
\label{sec:related-work}

Monte-Carlo tree search was initially suggested as a scheme for
finding approximately optimal policies for Markov Decision Processes
(MDP).  An MDP is defined by the set of states $S$, the set of actions
$A$ (also called {\em moves} in this paper),
the transition distribution $T(s, a, s')$, the reward function $R(s, a,
s')$, the initial state $s$ and an optional goal state $t$: $(S, A, T, R, s,
t)$ \cite{Russell.aima}.  Several MCTS schemes explore an MDP by
performing \emph{rollouts}---trajectories from the current state to a
state in which a termination condition is satisfied (either the goal
state, or a cutoff state for which the reward is evaluated
approximately).

\subsection{Multi-armed bandits and UCT}

In the Multi-armed Bandit problem \cite{Vermorel.bandits} we have a set
of $K$ arms (see Figure~\ref{fig:mab-simple-regret}.a). Each arm can
be pulled multiple times. Sometimes a cost is
associated with each pulling action. When the $i$th arm
is pulled, a random reward $X_i$ from an unknown stationary
distribution is encountered.  The reward is usually bounded between 0 and 1.
In the cumulative setting (the focus of much of the research literature on Multi-armed bandits),
all encountered rewards are collected by the agent. 
The UCB scheme was shown to be near-optimal in this respect \cite{Auer.ucb}:

\begin{dfn} Scheme $\mathbf{UCB(c)}$ pulls arm $i$ that maximizes 
upper confidence bound $b_i$ on the reward:
\begin{equation}
b_i=\overline X_i+\sqrt {\frac {c \log (n)} {n_i}}
\label{eqn:ucb}
\end{equation}
where $\overline X_i$ is the average sample reward obtained from arm $i$,
$n_i$ is the number of times arm $i$ was pulled, and $n$ is the total
number of pulls so far. \end{dfn} 

The UCT algorithm, an extension of UCB
to Monte-Carlo Tree Search is described in \cite{Kocsis.uct}, and
shown to outperform many state of the art search algorithms in both
MDP and adversarial games \cite{Eyerich.ctp,GellyWang.mogo}. 

In the simple regret (selection) setting, the agent gets to
collect only the reward of the last pull.

\begin{dfn}
The \textbf{simple regret} $\IE r$ of a sampling policy for the Multi-armed Bandit
Problem is the expected difference between the best true expected reward
$\mu_*$ and the true expected reward $\mu_j$ of the arm with the greatest sample mean,
$j =\arg \max_i\overline X_i$:
\begin{equation}
\IE r=\sum_{j=1}^K\Delta_j\Pr(j=\arg\max_i\overline X_i)
\label{eqn:simple-regret}
\end{equation}
where $\Delta_j=\mu_*-\mu_j$.
\end{dfn}

Strategies that minimize the simple regret are called pure exploration
strategies \cite{Bubeck.pure}. An upper bound on the simple regret of uniform sampling is
exponentially decreasing in the number of samples (see
\cite{Bubeck.pure}, Proposition~1). For UCB($c$) the best known respective upper bound on the simple
regret of UCB($c$) is only polynomially decreasing in the number of samples
(see \cite{Bubeck.pure}, Theorems~2,3). However, empirically
UCB($c$) appears to yield a lower simple regret than uniform
sampling. 

%Pure exploration in
%Multi-armed bandits is examined in \cite{Bubeck.pure}. On the one
%hand, the \cite{Bubeck.pure} prove certain upper and lower bounds for
%UCB and uniform sampling, showing that an upper bound on the simple
%regret is exponential in the number of samples for uniform sampling,
%while only polynomial for UCB. On the other hand, empirical
%performance of UCB appears to be better than that of uniform sampling.

\subsection{Metareasoning}

A completely different scheme for control of sampling can use the
principles of bounded rationality \cite{Horvitz.reasoningabout}
and metareasoning ---  \cite{Russell.right} provided a formal
description of rational metareasoning and case studies of applications
in several problem domains. In search, under myopic and sub-tree independence assumptions,
one maintains a current best move  $\alpha $ at the root, 
and finds the expected gain from finding another move 
$\beta $ to be better than the current best \cite{Russell.right}. The ``cost'' of search
actions can also be factored in.
Ideally, an ``optimal'' sampling scheme, to be used
for selecting what to sample, both at the root node \cite{HayRussell.MCTS} and elsewhere,
can be developed using metareasoning.
However, this task is daunting for the following reasons:
\begin{itemize}
\item The method is in general intractable, necessitating simplifying assumptions.
 However, using the standard metareasoning myopic assumption,
    where samples would be selected as though at most one sample
   can be taken before an action is chosen, we run into serious problems. Even the basic
  selection problem \cite{TolpinShimony.blinkered} exhibits a
  non-concave utility function and results in premature stopping of the
  standard myopic algorithms. This is due to the fact that the value of information of
  a single measurement (analogous to a sample in MCTS) is frequently
  less than its time-cost, even though this is not true for multiple
  measurements. 

When applying the selection problem to MCTS, the
situation is exacerbated.  The utility of an action is usually
bounded, and thus in many cases a single sample may be insufficient
to change the current best action, \emph{regardless} of its
outcome. As a result, we frequently get a \emph{zero} ``myopic''
value of information for a single sample.
\item Rational metareasoning requires a known distribution model, which may be
  difficult to obtain.
\item Defining the time-cost of a sample is not trivial.
\end{itemize}

As the above ultimate goal is extremely difficult
to achieve, we introduce in this paper
simple schemes more amenable to analysis, loosely based on the metareasoning
concept of value of information, and compare
them to UCB (on sets) and UCT (in trees).

\section{Sampling Based on Simple Regret}
\label{sec:results}

\subsection{Analysis of Sampling on Sets}
\label{sec:sampling-on-sets}

We examine two sampling schemes with super-polynomially
decreasing upper bounds on the simple regret. The bounds
suggest that these schemes achieve a lower simple regret
than uniform sampling; indeed, this is confirmed
by experiments. 

We first consider $\varepsilon$-greedy sampling as a straightforward
generalization of uniform sampling:
\begin{dfn} The \textbf{$\mathbf{\varepsilon}$-greedy} sampling scheme
pulls the arm that currently has the greatst sample mean, with probability
$0<\varepsilon<1$, and any other
arm with probability $\frac {1-\varepsilon} {K-1}$. 
\end{dfn}
This sampling scheme exhibits an exponentially decreasing simple regret:
\begin{thm} For every $0<\eta<1$  and $\gamma>1$ there exists $N$ such that for
  any number of samples $n>N$ the simple regret of the  $\varepsilon$-greedy
sampling scheme is  bounded from above as
\begin{equation}
\IE r_{\varepsilon\mbox{-}greedy}\le 2\gamma \sum_{i=1}^K\Delta_i\exp\left(\frac {-2\Delta_j^2n\varepsilon}
  {\left(1+\sqrt{\frac {(K-1)\varepsilon}
        {1-\varepsilon}}\right)^2}\right)
\end{equation}
with probability at least $1-\eta$.
\end{thm}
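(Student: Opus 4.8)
The plan is to bound the simple regret arm by arm. By the definition of simple regret, $\IE r=\sum_i\Delta_i\Pr(i=\arg\max_k\overline X_k)$, and the optimal arm contributes $0$, so it suffices to bound $\Pr(i=\arg\max_k\overline X_k)$ for each suboptimal arm $i$. Since being the empirical best in particular requires beating the optimal arm $*$, I would first use the containment $\{i=\arg\max_k\overline X_k\}\subseteq\{\overline X_i\ge\overline X_*\}$, giving $\Pr(i=\arg\max_k\overline X_k)\le\Pr(\overline X_i\ge\overline X_*)$ and reducing everything to a two-arm comparison.

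The core estimate is a Hoeffding bound on $\Pr(\overline X_i\ge\overline X_*)$. Writing $\Delta_i=a+b$ with $a,b>0$, the event $\overline X_i\ge\overline X_*$ forces either $\overline X_i\ge\mu_i+b$ or $\overline X_*\le\mu_*-a$ (otherwise $\overline X_i<\mu_i+b=\mu_*-a<\overline X_*$), so a union bound plus Hoeffding on $[0,1]$-valued rewards yields $\Pr(\overline X_i\ge\overline X_*)\le\exp(-2n_ib^2)+\exp(-2n_*a^2)$, where $n_i,n_*$ are the pull counts. I would then substitute the per-round sampling rates: the empirically best arm is pulled at rate $\varepsilon$ and every other arm at rate $\frac{1-\varepsilon}{K-1}$, so that for large $n$ the optimal arm accumulates $n_*\approx n\varepsilon$ pulls and each suboptimal arm $n_i\approx n\frac{1-\varepsilon}{K-1}$. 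Choosing the split to equalize the two exponents gives $a=\Delta_i/\left(1+\sqrt{\frac{(K-1)\varepsilon}{1-\varepsilon}}\right)$, and substitution makes both terms equal to $\exp\left(\frac{-2\Delta_i^2n\varepsilon}{\left(1+\sqrt{(K-1)\varepsilon/(1-\varepsilon)}\right)^2}\right)$. Their sum is exactly twice this, which produces the leading factor $2$ and the stated exponent once we multiply by $\Delta_i$ and sum over arms. (Note that the $\Delta_j$ in the exponent should be read as $\Delta_i$, matching the summation index, so each suboptimal arm is penalised by its own gap.)

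The delicate part, and what I expect to be the main obstacle, is that $n_i$ and $n_*$ are neither deterministic nor independent: under $\varepsilon$-greedy the identity of the greedy arm is itself random and time-varying, so the pull sequence is adaptive rather than i.i.d., and the optimal arm receives the full rate $\varepsilon$ only once it has been reliably identified as empirically best. I would control this through a concentration argument on the adaptively generated counts, showing that for $n$ larger than some threshold $N$ the counts are close to their target values $n\varepsilon$ and $n\frac{1-\varepsilon}{K-1}$ on an event of probability at least $1-\eta$. The slack factor $\gamma>1$ is precisely what absorbs the residual gap between the realized counts and these ideal values, together with the lower-order error from the early rounds in which the optimal arm is not yet greedy. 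This is exactly why the statement is only finite-time, holding ``for $n>N$'' and ``with probability at least $1-\eta$'' rather than as a clean deterministic inequality.

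I would flag one subtlety to watch while filling in the concentration step. Because the fluctuations of $n_*$ are of order $\sqrt{n}$, one cannot match the clean exponent $2n\varepsilon a^2$ by naively taking an expectation of $\exp(-2n_*a^2)$ over a binomial-type count, since that costs a constant factor inside the exponent; the clean $n\varepsilon$ in the bound is obtained by conditioning on the good event that the counts meet their targets (consistent with the ``$1-\eta$'' language) and letting $\gamma$ and $N$ soak up the remaining slack. Verifying that this conditioning is compatible with the adaptivity of the greedy choice is where the argument must be made carefully.
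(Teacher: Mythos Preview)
Your proposal is correct and follows essentially the same route as the paper's proof outline: bound $\Pr(\overline X_i\ge\overline X_*)$ via a union-bound split of the gap $\Delta_i$, apply Hoeffding on each piece, replace the random counts $n_*,n_i$ by their asymptotic targets $n\varepsilon$ and $n(1-\varepsilon)/(K-1)$ (the paper justifies this simply by ``$\overline X_i\to\mu_i$ in probability, hence $n_*\to n\varepsilon$''), and then choose the split point to equalize the two exponents, which is exactly your $a=\Delta_i/\bigl(1+\sqrt{(K-1)\varepsilon/(1-\varepsilon)}\bigr)$. Your discussion of the adaptivity of the pull counts and the role of $\gamma$, $\eta$, and $N$ in absorbing the slack is in fact more explicit than what the paper provides, which treats this step at the level of a one-line ``in probability'' observation.
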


\begin{proof}[Proof outline:] 

Bound the probability $P_i$ that a non-optimal arm $i$ is selected. Split the interval
 $[\mu_i, \mu_*]$ at $\mu_i+\delta_i$. Apply the Chernoff-Hoeffding bound to get:
\begin{eqnarray}
P_i&\le&\Pr[\overline X_i>\mu_i+\delta_i]+\Pr[\overline X_*<\mu_*-(\Delta_i-\delta_i)]\nonumber\\
   &\le&\exp\left(-2\delta_i^2n_i\right)+\exp\left(-2(\Delta_i-\delta_i)^2n_*\right)
\end{eqnarray}
Observe that, in probability, $\overline X_i \rightarrow \mu_i$ as $n\rightarrow\infty$, 
therefore $n_*\rightarrow n\varepsilon$, $n_i\rightarrow\frac
{n(1-\varepsilon)} {K-1}$ as $n\rightarrow \infty$. Conclude that for
every $0<\eta<1$, $\gamma>1$ there exists $N$ such that for every $n>N$ and
all non-optimal arms $i$:
\begin{equation}
P_i \le \gamma \left(\exp\left(\frac {-2\delta_i^2n(1-\varepsilon)}{K-1}\right)
+\exp\left(-2(\Delta_i-\delta_i)^2n\varepsilon\right)\right)
\label{eqn:epsgreedy-probbound}
\end{equation}
Require
\begin{eqnarray}
\exp\left(-\frac {2\delta_i^2n(1-\varepsilon)}{K-1}\right)
&=&\exp\left(-2(\Delta_i-\delta_i)^2n\varepsilon\right)\nonumber\\
\frac {\delta_i} {\Delta_i-\delta_i}&=&\sqrt{\frac {(K-1)\varepsilon} {1-\varepsilon}}
\label{eqn:epsgreedy-constant-varepsilon}
\end{eqnarray}
Substitute (\ref{eqn:epsgreedy-probbound}) together with
(\ref{eqn:epsgreedy-constant-varepsilon}) into
(\ref{eqn:simple-regret}) and obtain
\begin{equation}
\IE r_{\varepsilon\mbox{-}greedy}\le 2\gamma \sum_{i=1}^K\Delta_i\exp\left(\frac {-2\Delta_i^2n\varepsilon}
  {\left(1+\sqrt{\frac {(K-1)\varepsilon}
        {1-\varepsilon}}\right)^2}\right)
\end{equation}
\end{proof}

In particular, as the number of arms $K$ grows, the bound for $\frac 1
2$-greedy sampling ($\varepsilon=\frac 1 2$) becomes considerably tighter than for uniform
random sampling ($\varepsilon=\frac 1 K$):
\begin{crl}
For uniform random sampling, 
\begin{equation}
\IE r_{uniform}\le 2\gamma \sum_{i=1}^K\Delta_i\exp\left(-\frac {\Delta_i^2n} {K}\right)
\end{equation}
For $\frac 1 2$-greedy sampling,
\begin{eqnarray}
\IE r_{\frac 1 2\mbox{-}greedy}&\le& 2\gamma \sum_{i=1}^K\Delta_i\exp\left(\frac {-2\Delta_i^2n}
  {\left(1+\sqrt{K-1}\right)^2}\right)\\
  &\approx& 2\gamma \sum_{i=1}^K\Delta_i\exp\left(\frac
    {-2\Delta_i^2n} {K}\right)\mbox{ for }K\gg 1\nonumber
\end{eqnarray}
\end{crl}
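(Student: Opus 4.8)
The plan is to derive both bounds as direct specializations of the preceding theorem, since the $\varepsilon$-greedy scheme coincides with each named scheme for a particular value of $\varepsilon$, and the theorem's bound holds for every fixed $0<\varepsilon<1$. First I would identify uniform random sampling with the choice $\varepsilon=\frac 1K$: under the $\varepsilon$-greedy definition the current-best arm is pulled with probability $\varepsilon=\frac 1K$, while each of the remaining $K-1$ arms is pulled with probability $\frac{1-\varepsilon}{K-1}=\frac{1-1/K}{K-1}=\frac 1K$, so all $K$ arms are sampled equiprobably, which is exactly uniform sampling, and $\varepsilon=\frac 1K\in(0,1)$ for $K\ge 2$ so the theorem applies. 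It then remains to substitute $\varepsilon=\frac 1K$ into the theorem's exponent. The only computation of substance is the inner ratio: abbreviating $s=\sqrt{\frac{(K-1)\varepsilon}{1-\varepsilon}}$, I would evaluate $s=\sqrt{\frac{(K-1)/K}{(K-1)/K}}=1$, so that $\left(1+s\right)^2=4$ and the exponent $\frac{-2\Delta_i^2 n\varepsilon}{(1+s)^2}$ collapses to a constant multiple of $-\frac{\Delta_i^2 n}{K}$, giving the first displayed inequality.

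For $\frac 12$-greedy sampling I would take $\varepsilon=\frac 12$ in the definition. Now $s=\sqrt{\frac{(K-1)(1/2)}{1/2}}=\sqrt{K-1}$, hence $\left(1+s\right)^2=\left(1+\sqrt{K-1}\right)^2$, and substituting into the theorem reproduces the first line of the $\frac 12$-greedy bound verbatim. The asymptotic form in the second line then follows by expanding the denominator, $\left(1+\sqrt{K-1}\right)^2=K+2\sqrt{K-1}=K\bigl(1+o(1)\bigr)$ as $K\to\infty$, so that the exponent tends to $-\frac{2\Delta_i^2 n}{K}$.

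Because this is a corollary, I expect no genuine obstacle beyond bookkeeping: the entire content lies in the two one-line simplifications of $s$ and in recognizing that $\varepsilon=\frac 1K$ reproduces uniform sampling. The one point deserving care is that the theorem's qualifier ``there exists $N$ such that for all $n>N$, with probability at least $1-\eta$'' is stated for a fixed $\varepsilon$, so I would note that $N$ and the guarantee may depend on the chosen $\varepsilon\in\{1/K,\,1/2\}$; this is harmless since $K$ is held fixed in each bound. The comparison the corollary is really after then reads off immediately: once $K\gg 1$ the effective coefficient of $\Delta_i^2 n$ in the exponent is larger for $\frac 12$-greedy than for uniform sampling, so the $\frac 12$-greedy bound decays strictly faster.
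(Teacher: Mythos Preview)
Your approach is exactly the paper's: the sentence introducing the corollary explicitly identifies uniform sampling with $\varepsilon=\frac 1K$ and $\frac 1 2$-greedy with $\varepsilon=\frac 1 2$, and the corollary is meant to be nothing more than the direct substitution you carry out. Your computations $s=1$ for $\varepsilon=\frac 1K$ and $s=\sqrt{K-1}$ for $\varepsilon=\frac 1 2$ are correct, as is the asymptotic $(1+\sqrt{K-1})^2=K(1+o(1))$.

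One arithmetic caveat on your claim that the substitution ``reproduces the first line of the $\frac 1 2$-greedy bound verbatim.'' Plugging $\varepsilon=\frac 1 2$ into Theorem~1 literally yields exponent $-\Delta_i^2 n/(1+\sqrt{K-1})^2$, not $-2\Delta_i^2 n/(1+\sqrt{K-1})^2$; likewise $\varepsilon=\frac 1 K$ gives $-\Delta_i^2 n/(2K)$ rather than $-\Delta_i^2 n/K$ (this is presumably why you hedged with ``a constant multiple of'' in the uniform case). The factor-of-two mismatch is an internal inconsistency between Theorem~1 and the corollary as printed, not a defect in your derivation, and since both displayed bounds shift by the same factor the comparison the corollary is making is unaffected.
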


$\varepsilon$-greedy is based solely on sampling the
arm that has the greatest sample mean (henceforth called the ``current best'' arm)
with a higher probability then the rest of the arms, and ignores
information about sample means of other arms. On the other hand,
UCB distributes samples in accordance with sample means, but, in order to
minimize cumulative regret, chooses the current best arm too often.
Intuitively, a better scheme for simple regret minimization would
distribute samples in a way similar to UCB, but would sample the current best arm
less often. This can be achieved by replacing $\log(\cdot)$ in
Equation~\ref{eqn:ucb} with a faster growing sublinear function, for
example, $\sqrt\cdot$.
\begin{dfn} Scheme $\mathbf{UCB_{\sqrt{\cdot}}(c)}$ pulls arm $i$ that
maximizes $b_i$, where:
\begin{equation}
b_i=\overline X_i+\sqrt {\frac {c \sqrt n} {n_i}}
\end{equation}
where, as before, $\overline X_i$ is the average reward obtained from arm $i$,
$n_i$ is the number of times arm $i$ was pulled, and $n$ is the total
number of pulls so far. \end{dfn}
This scheme also exhibits a super-polynomially decreasing simple regret:
\begin{thm}  For every $0<\eta<1$  and $\gamma>1$ there exists $N$ such that for
  any number of samples $n>N$ the simple regret of the  UCB$_{\sqrt{\cdot}}$($c$)
sampling scheme is  bounded from above as
\begin{equation}
\IE r_{ucb\sqrt{\cdot}} \le
2\gamma\sum_{i=1}^K\Delta_i\exp\left(-\frac {c\sqrt{n}} 2\right)
\end{equation}
with probability at least $1-\eta$.
\end{thm}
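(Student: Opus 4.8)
The plan is to reuse the template of the $\varepsilon$-greedy proof almost verbatim; the only genuinely new work lies in characterising the allocation of pulls produced by the adaptive $UCB_{\sqrt{\cdot}}$ rule. As before, I would bound the probability $P_i$ that a suboptimal arm $i$ attains the greatest sample mean by splitting the gap $[\mu_i,\mu_*]$ at $\mu_i+\delta_i$ and applying the Chernoff--Hoeffding inequality, obtaining
\[
P_i \le \exp\!\left(-2\delta_i^2 n_i\right) + \exp\!\left(-2(\Delta_i-\delta_i)^2 n_*\right),
\]
after which the regret follows from $\IE r=\sum_i \Delta_i P_i$. Everything downstream hinges on plugging in the correct asymptotic values of $n_i$ and $n_*$.

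The crux, and the place where this argument departs from the $\varepsilon$-greedy case in which the proportions were fixed by design, is determining that allocation. I would argue that the $UCB_{\sqrt{\cdot}}$ dynamics drive every upper confidence bound toward $\mu_*$. The optimal arm is pulled a constant fraction of the time, so $\overline X_*\to\mu_*$, its bonus vanishes, and $b_*\to\mu_*$. A suboptimal arm $i$ is selected precisely while $\overline X_i+\sqrt{c\sqrt n/n_i}\gtrsim\mu_*$, and a fixed-point argument pins $n_i$ near the value that equalises its bound with $\mu_*$: too few pulls inflate the bonus above $\Delta_i$ and force another pull of $i$, too many drop $b_i$ below $b_*$ until the growing $\sqrt n$ restores it. Hence $\sqrt{c\sqrt n/n_i}\to\Delta_i$, i.e.\ $n_i\to c\sqrt n/\Delta_i^2$ (a two-sided estimate), the suboptimal arms jointly absorb only $O(\sqrt n)$ pulls, and $n_*\to n$. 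This is the analogue of the standard UCB allocation analysis with $\log n$ replaced by $\sqrt n$, and it is the main obstacle: $n_i$ and the sample means are random and coupled, so making the equilibrium rigorous is where the probability $1-\eta$ (the concentration events $\overline X_i\approx\mu_i$) and the slack factor $\gamma$ (absorbing the error in the asymptotic equalities) must enter.

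With the proportions established I would fix the split $\delta_i=\Delta_i/2$. Substituting $n_i\approx c\sqrt n/\Delta_i^2$ collapses the first term to exactly the target,
\[
\exp\!\left(-2\left(\tfrac{\Delta_i}{2}\right)^2\frac{c\sqrt n}{\Delta_i^2}\right)=\exp\!\left(-\tfrac{c\sqrt n}{2}\right),
\]
while the second term, $\exp(-\tfrac{\Delta_i^2}{2}n_*)\approx\exp(-\tfrac{\Delta_i^2 n}{2})$, decays in $n$ rather than $\sqrt n$ and is therefore dominated by the first once $n$ exceeds a threshold of order $c^2/\Delta_i^4$. Bounding the (negligible) second term by the first contributes the factor $2$ and the asymptotic slack contributes $\gamma$, giving $P_i\le 2\gamma\exp(-c\sqrt n/2)$; summing $\IE r=\sum_i\Delta_i P_i$ then yields the stated bound. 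The one remaining point requiring care is to verify that a single $N$ works uniformly over the arms: the arm with the smallest gap $\Delta_i$ is binding, both for the convergence of its allocation and for the threshold $c^2/\Delta_i^4$, so $N$ can be taken as the maximum of these quantities over the suboptimal arms.
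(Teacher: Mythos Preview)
Your proposal matches the paper's proof outline almost exactly: the same Chernoff--Hoeffding split at $\delta_i=\Delta_i/2$, the same allocation asymptotic $n_i\to c\sqrt n/\Delta_i^2$, and the same conclusion $P_i\le 2\gamma\exp(-c\sqrt n/2)$. The only cosmetic difference is that the paper bounds the second exponential via $n_i\le n_*$ (so $\exp(-\Delta_i^2 n_*/2)\le\exp(-\Delta_i^2 n_i/2)$ directly), rather than your route through $n_*\to n$ and a comparison of $\sqrt n$ versus $n$ decay rates.
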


\begin{proof}[Proof outline:] Bound the probability $P_i$ that a
  non-optimal arm $i$ is chosen. Split the interval $[\mu_i, \mu_*]$
  at $\mu_i+\frac {\Delta_i} 2$. Apply the Chernoff-Hoeffding bound to get:
\begin{eqnarray}
P_i&\le&Pr\left[\overline X_i>\mu_i+\frac {\Delta_i} 2\right]+\Pr\left[\overline X_*<\mu_*-\frac {\Delta_i}
  2\right]\nonumber \\
   &\le&\exp\left(-\frac {\Delta_i^2n_i} 2\right)+\exp\left(-\frac {\Delta_i^2n_*} 2\right)
\end{eqnarray}
Observe that, in probability, $n_i\to \frac {c \sqrt n}
{\Delta_i^2}$, $n_i\le n_*$ as $n\to\infty$. Conclude that for
every $0<\eta<1$, $\gamma>1$ there exists $N$ such that for every $n>N$ and
all non-optimal arms $i$:
\begin{equation}
P_i \le 2\gamma \exp\left(-\frac {c \sqrt n} 2\right)
\label{eqn:usbsqrt-probbound}
\end{equation}
Substitute (\ref{eqn:usbsqrt-probbound}) into
(\ref{eqn:simple-regret}) and obtain
\begin{equation}
\IE r_{ucb\sqrt{\cdot}} \le 2\gamma\sum_{i=1}^K\Delta_i\exp\left(-\frac {c\sqrt{n}} 2\right)
\end{equation}
\end{proof}

\subsection{Sampling in Trees}
\label{sec:sampling-in-trees}

As mentioned above,
UCT \cite{Kocsis.uct} is an extension of UCB for MCTS, that
applies UCB($c$) at each step of a rollout.  At the root node,
the sampling in MCTS is usually aimed at finding the first move to perform. Search is re-started,
either from scratch or using some previously collected information,
after observing the actual outcome (in MDPs) or the opponent's move
(in adversarial games). Once one move is shown to be the best
choice with high confidence, the value of information of additional
samples of the best move (or, in fact, of any other samples) is low. Therefore, one should be able to do
better than UCT by optimizing {\em simple regret}, rather than {\em cumulative regret}, at the root node.

Nodes deeper in the search tree are a different matter.
In order to support an optimal move choice at the root, it is beneficial in many cases
to find a more precise estimate of the {\em value} of the state in these search
tree nodes. For these internal nodes, optimizing simple regret is not the answer, and 
cumulative regret optimization is not so far off the mark. Lacking a complete metareasoning for sampling,
which would indicate the optimal way to sample both root nodes and internal nodes,
our suggested improvement to UCT thus combines different sampling schemes on the first step and
during the rest of each rollout:
\begin{dfn}
The \textbf{SR+CR MCTS sampling scheme} selects an action at the
current root node according to a scheme suitable for minimizing 
the simple regret (\textbf{SR}), such as $\frac 1 2$-greedy or UCB$_{\sqrt{\cdot}}$, and (at non-root nodes)
then selects actions according to UCB, which approximately minimizes the cumulative regret (\textbf{CR}).
\end{dfn}

The pseudocode of this two-stage rollout for an undiscounted MDP is in
Algorithm~\ref{alg:two-stage-mcts}: \textsc{FirstAction} selects the first
step of a rollout (line~\ref{alg:srcr-first-action}), and
\textsc{NextAction} (line~\ref{alg:srcr-next-action}) selects steps during
the rest of the rollout (usually using UCB). The reward statistic for
the selected action is updated (line~\ref{alg:srcr-update-stat}), and
the sample reward is back-propagated (line~\ref{alg:return-reward}) 
towards the current root.

We denote such two-step realizations of SR+CR as \emph{Alg}+UCT, where
\emph{Alg} is the sampling scheme employed at the first step of a
rollout (e.g. $\frac 1 2$-greedy+UCT).

\begin{algorithm}[h!]
\caption{Two-stage Monte-Carlo tree search sampling}
\label{alg:two-stage-mcts}
\begin{algorithmic}[1]
\Procedure{Rollout}{node, depth=1}
  \If {\Call{IsLeaf}{node, depth}}
    \State \textbf{return} 0
  \Else
    \State \textbf{if} depth=1 \textbf{then} action $\gets$ \Call{FirstAction}{node} \label{alg:srcr-first-action}   
    \State \textbf{else} action $\gets$ \Call{NextAction}{node} \label{alg:srcr-next-action}
    \State next-node $\gets$ \Call{NextState}{node, action}
    \State reward $\gets$ \Call{Reward}{node, action, next-node}
     \State \hspace{4em} + \Call{Rollout}{next-node, depth+1}
    \State \Call{UpdateStats}{node, action, reward} \label{alg:srcr-update-stat}
    \State \textbf{return} reward \label{alg:return-reward}
  \EndIf
\EndProcedure
\end{algorithmic}
\end{algorithm}

We expect such two-stage sampling schemes to
outperform UCT and be
significantly less sensitive to the tuning of the exploration factor
$c$ of UCB($c$). That is since the contradiction between the need for a larger
value of $c$ on the first step (simple regret) and a smaller
value for the rest of the rollout (cumulative regret)
\cite{Bubeck.pure} is resolved. In fact, a sampling scheme that uses
UCB($c$) at all steps but a larger value of $c$ for the first step
than for the rest of the steps, should also outperform UCT.

\subsection{VOI-aware Sampling}
\label{sec:voi-sampling}

Further improvement can be achieved by computing or estimating the
value of information (VOI) of the rollouts and choosing rollouts that
maximize the VOI. 
However, as indicated above, actually computing the VOI is infeasible.
Instead we suggest the following scheme based on the following
features of value of information:
\begin{enumerate}
\item An estimate of the probability that one or more rollouts will make another action
appear better than the current best $\alpha $.
\item An estimate of the gain that may be incurred if such a change occurs.
\end{enumerate}

If the distribution of results generated by the rollouts were known, the
above features could be easily computed. However, this is not the case for
most MCTS applications. Therefore, we estimate bounds on the feature values from
the current set of samples, based on the myopic assumption that the algorithm will only
sample one of the actions, and use these bounds as the feature values, to get:
\begin{eqnarray}
VOI_\alpha&\approx&\frac {\overline X_\beta} {n_\alpha+1}
\exp\left(-2(\overline X_\alpha - \overline X_\beta)^2 n_\alpha\right)\\
VOI_i&\approx&\frac {1-\overline X_\alpha} {n_i+1}
\exp\left(-2(\overline X_\alpha - \overline X_i)^2 n_i\right),\; i\ne\alpha\nonumber\\
\mbox{where }&&\alpha=\arg\max_i \overline X_i,\quad
             \beta=\arg\max_{i,\,i\ne\alpha} \overline X_i\nonumber
\end{eqnarray}
with $VOI_{\alpha}$ being the (approximate) value for sampling the current best action,
and $VOI_i$ is the (approximate) value for sampling some other action $i$. 

These equations were derived as follows. The gain from switching from the current best action $\alpha$ to another
action can be bounded:  by  the current expectation of the value the current second-best action
for the case where we sample only $\alpha$, and by 1 (the maximum reward) minus the current expectation
of $\alpha $ when sampling any other action. 
The probability that another action be found
best can be bounded by an exponential function of the difference in expectations when the true value
of the actions becomes known. But the effect of each individual sample on the sample mean
is inversely proportional to the current number of samples, hence the current number of samples (plus one
in order to handle the initial case of no previous samples) in the denominator.

These VOI estimates are used in the ``VOI-aware'' sampling scheme as follows: sample the
action that has maximum estimated VOI. We judged these estimates to be too crude
to be used as ``stopping criteria'' that can be used to cut off sampling,
leaving this issue for future research.
Although this scheme appears too complicated to be amenable to a formal
analysis, early experiments (Section \ref{sec:emp})
with this approach demonstrate a significantly lower simple regret.

\section{Empirical Evaluation}
\label{sec:emp}

The results were empirically verified on Multi-armed Bandit instances,
on search trees, and on the sailing domain, as defined in
\cite{Kocsis.uct}. In most cases, the experiments showed a lower average
simple regret for $\frac 1 2$-greedy an UCB$_{\sqrt{\cdot}}$ than for
UCB on sets, and for the SR+CR scheme than for UCT in trees.

\subsection{Simple regret in multi-armed bandits}
\label{sec:emp-mab}

Figure~\ref{fig:mab-simple-regret} presents a comparison of MCTS sampling
schemes on Multi-armed bandits. Figure~\ref{fig:mab-simple-regret}.a shows the search tree
corresponding to a problem instance. Each arm returns a random reward
drawn from a Bernoulli distribution. The search selects an arm
and compares the expected reward, unknown to the algorithm during the
sampling, to the expected reward of the best arm.

Figure~\ref{fig:mab-simple-regret}.b shows the regret
vs. the number of samples, averaged over $10000$ experiments for
randomly generated instances of 32 arms. Either $\frac 1
2$-greedy  or UCB$_{\sqrt{\cdot}}$ dominate UCB over the
whole range. For larger number of samples, the advantage
of UCB$_{\sqrt{\cdot}}$ over $\frac 1 2$-greedy becomes more significant.

\begin{figure}[h!]
  \begin{minipage}[c]{1.0\linewidth}
    \centering
    \includegraphics[scale=0.8]{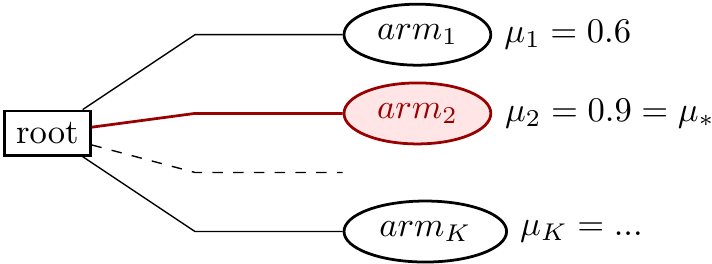}\\
    a. search tree, with best arm shaded
    \vspace{1em}
  \end{minipage}
  \begin{minipage}[c]{1.0\linewidth}
    \centering
    \includegraphics[scale=0.45]{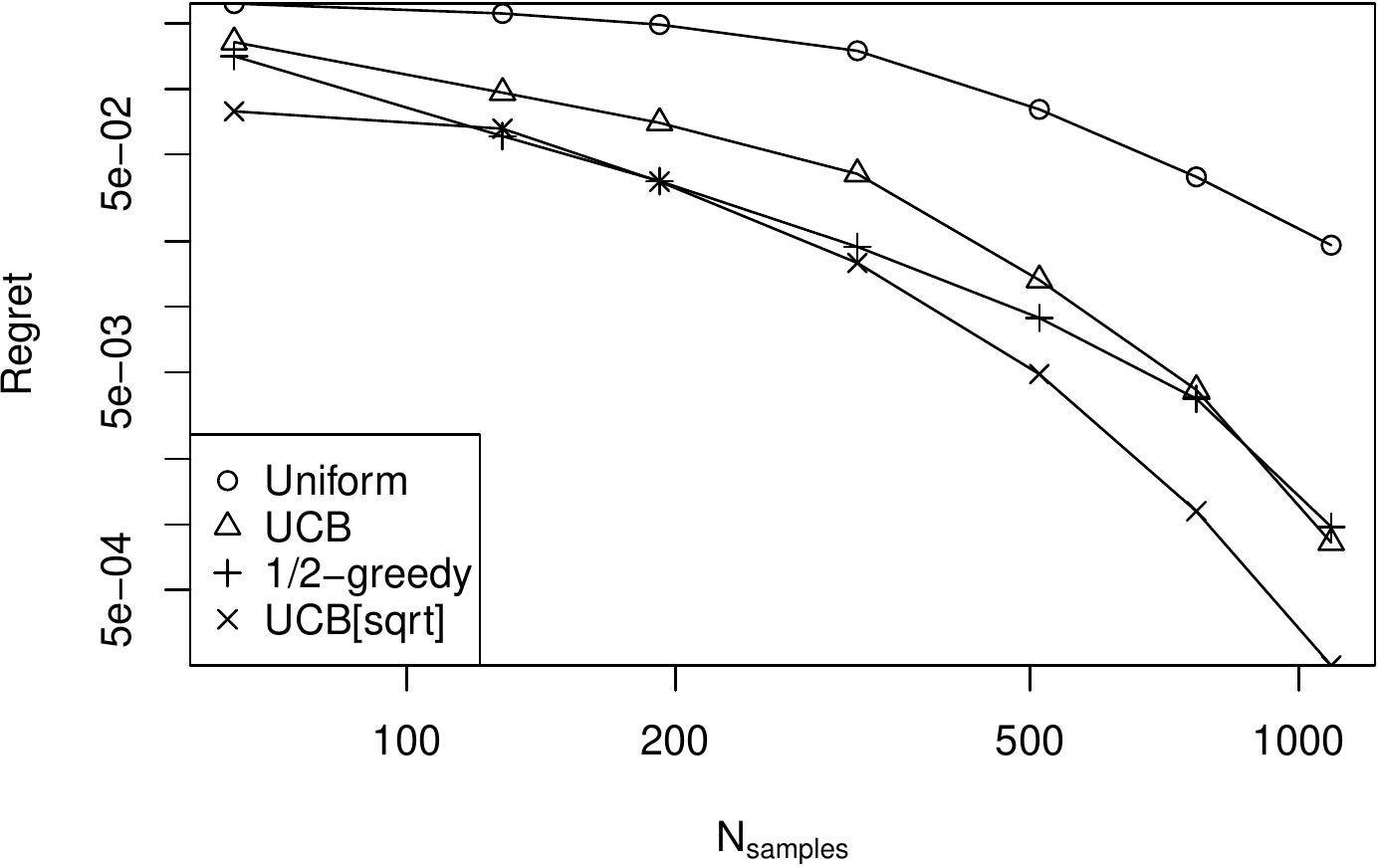}\\
    b. regret vs. number of samples
  \end{minipage}
  \caption{Simple regret in MAB}
  \label{fig:mab-simple-regret}
\end{figure}

\subsection{Monte Carlo tree search}
\label{sec:emp-mcts}

The second set of experiments was performed on randomly generated
2-level max-max trees crafted so as to deliberately deceive uniform sampling
(Figure~\ref{fig:mcts-regret}.a), necessitating  an adaptive sampling scheme, 
such as UCT. That is due to the switch nodes, each with 2 children with anti-symmetric
values, which would cause a uniform sampling scheme to incorrectly give them all a value of 0.5.

Simple regret vs. the number of samples are shown for trees with root degree 16
(Figure~\ref{fig:mcts-regret}.b) and 64
(Figure~\ref{fig:mcts-regret}.c). The exploration factor $c$ is set to
$2$, the default value for rewards in the range $[0, 1]$.
The algorithms exhibit a similar relative performance: either $\frac 1
2$-greedy+UCT or UCB$_{\sqrt{\cdot}}$+UCT
result in the lowest regret, UCB$_{\sqrt{\cdot}}$+UCT dominates UCT everywhere
except when the number of samples is small. The advantage of both $\frac 1
2$-greedy+UCT and UCB$_{\sqrt{\cdot}}$+UCT grows with the number of arms.

\begin{figure}[h!]
  \begin{minipage}[c]{1.0\linewidth}
    \centering
    \includegraphics[scale=0.7]{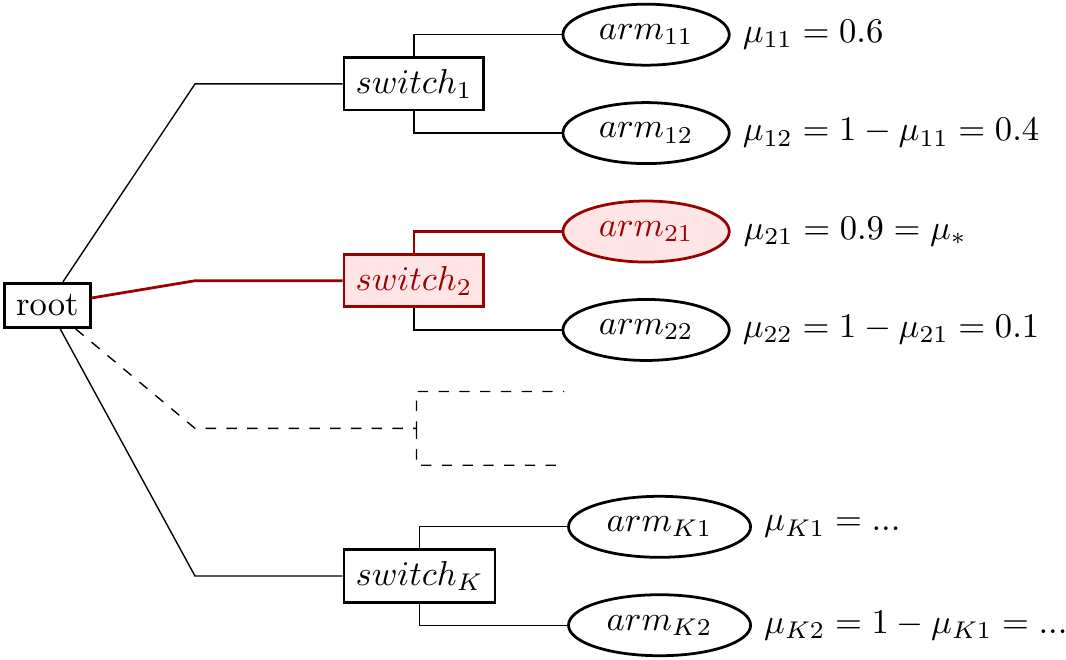}\\
    a. search tree, with path to the best arm shaded
    \vspace{1em}
  \end{minipage}
  \begin{minipage}[c]{1.0\linewidth}
    \centering
    \includegraphics[scale=0.45]{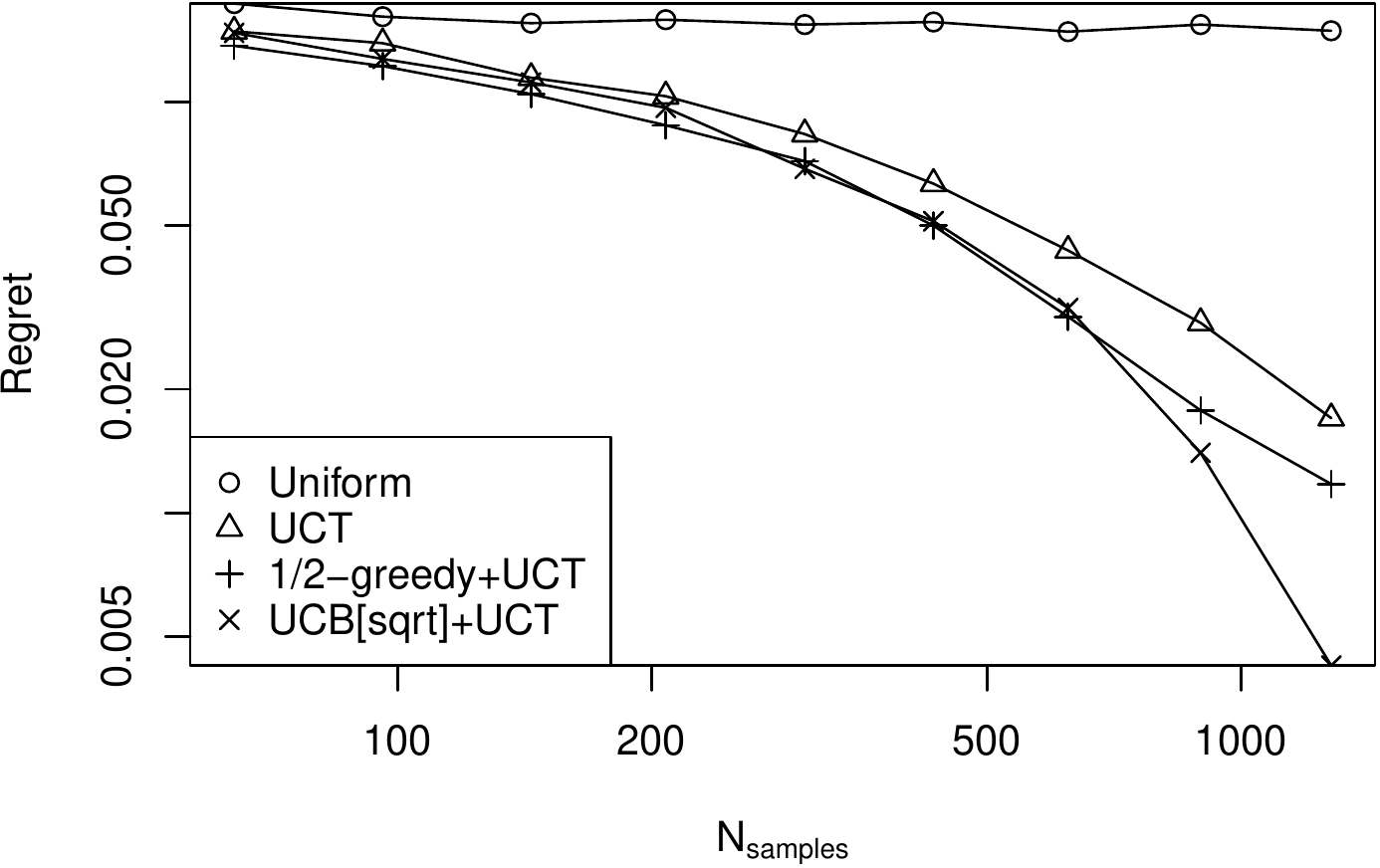}\\ 
    b. 16 arms
    \vspace{0.5em}
  \end{minipage}
  \begin{minipage}[c]{1.0\linewidth}
    \centering
    \includegraphics[scale=0.45]{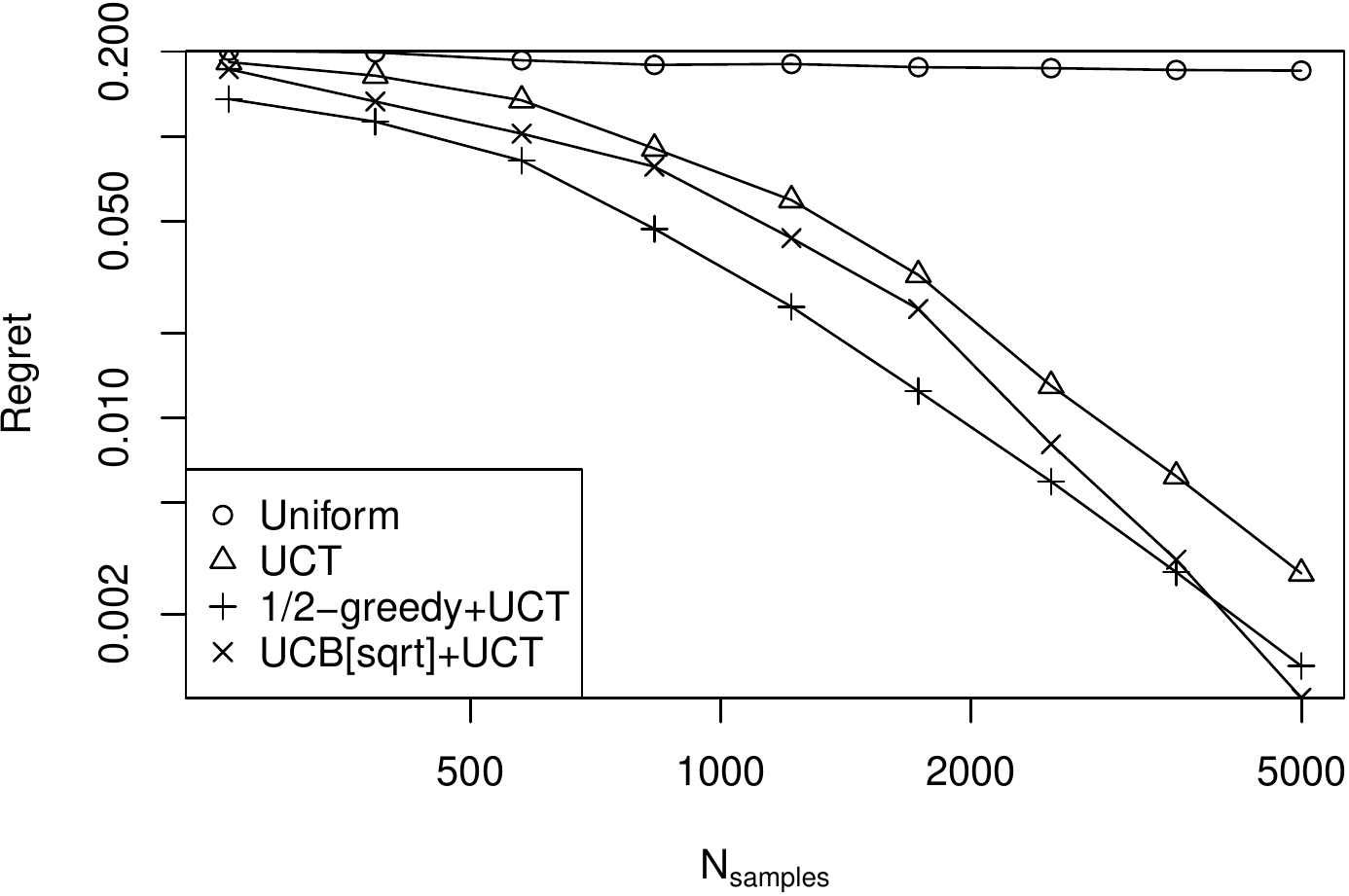} \\
    c. 64 arms
  \end{minipage}
  \caption{MCTS in random trees}
  \label{fig:mcts-regret}
\end{figure}

\subsection{The sailing domain}
\label{sec:emp-sailing}

Figures~\ref{fig:sailing-cost-vs-nsamples}--\ref{fig:sailing-lake-size}
show results of experiments on the sailing
domain. Figure~\ref{fig:sailing-cost-vs-nsamples} shows the regret
vs. the number of samples, computed for a range of values of
$c$. Figure~\ref{fig:sailing-cost-vs-nsamples}.a shows the median
cost, and Figure~\ref{fig:sailing-cost-vs-nsamples}.b --- the minimum
costs. UCT is always worse than either $\frac 1 2$-greedy+UCT or
UCB$_{\sqrt{\cdot}}$+UCT, and is sensitive to the value of $c$: the median cost is
much higher than the minimum cost for UCT. For both $\frac 1 2$-greedy+UCT
and UCB$_{\sqrt{\cdot}}$+UCT, the difference is significantly less prominent.

\begin{figure}[h!]
  \begin{minipage}[b]{1.0\linewidth}
    \centering
    \includegraphics[scale=0.45]{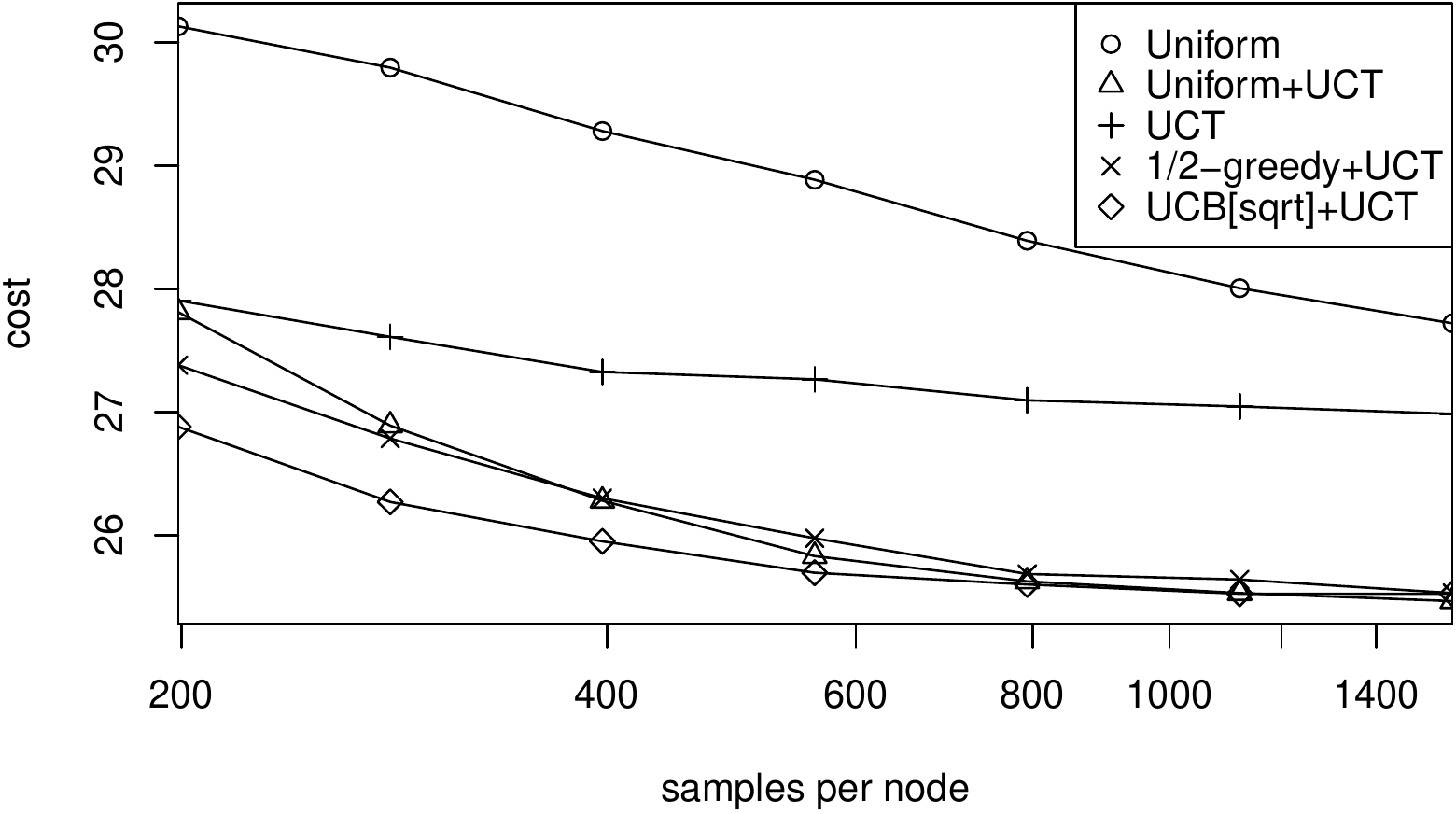}\\
    a. median cost
    \vspace{1em}
  \end{minipage}
  \begin{minipage}[b]{1.0\linewidth}
    \centering
    \includegraphics[scale=0.45]{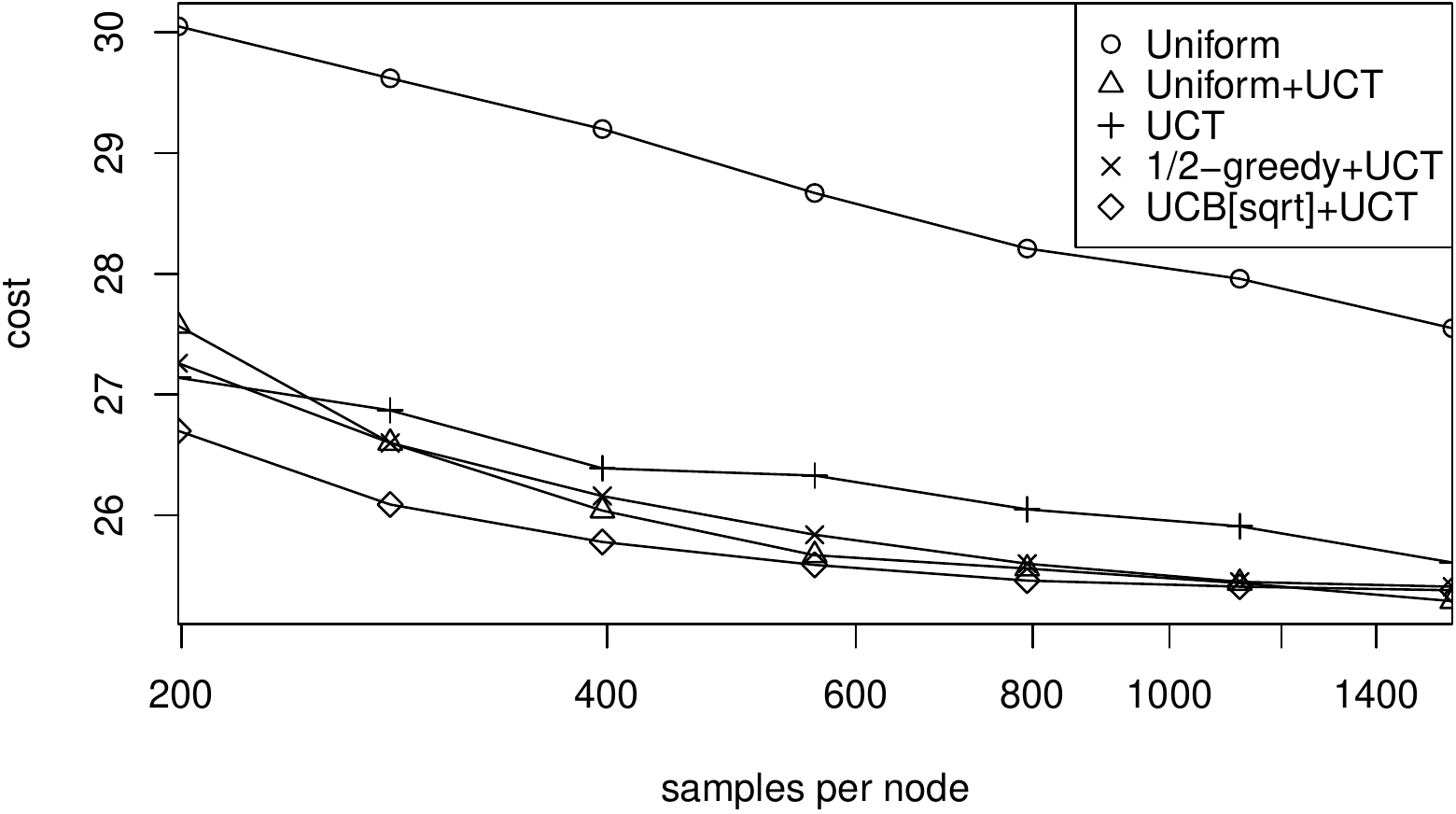}\\
    b. minimum cost
  \end{minipage}
  \caption{The sailing domain, $6\times 6$ lake, cost vs. samples}
  \label{fig:sailing-cost-vs-nsamples}
\end{figure}

Figure~\ref{fig:sailing-cost-vs-factor} shows the regret vs. the
exploration factor for different numbers of samples. UCB$_{\sqrt{\cdot}}$+UCT is always better than
UCT, and $\frac 1 2$-greedy+UCT is better than UCT expect for a small range of
values of the exploration factor. 

\begin{figure}[h!]
  \centering
  \includegraphics[scale=0.45]{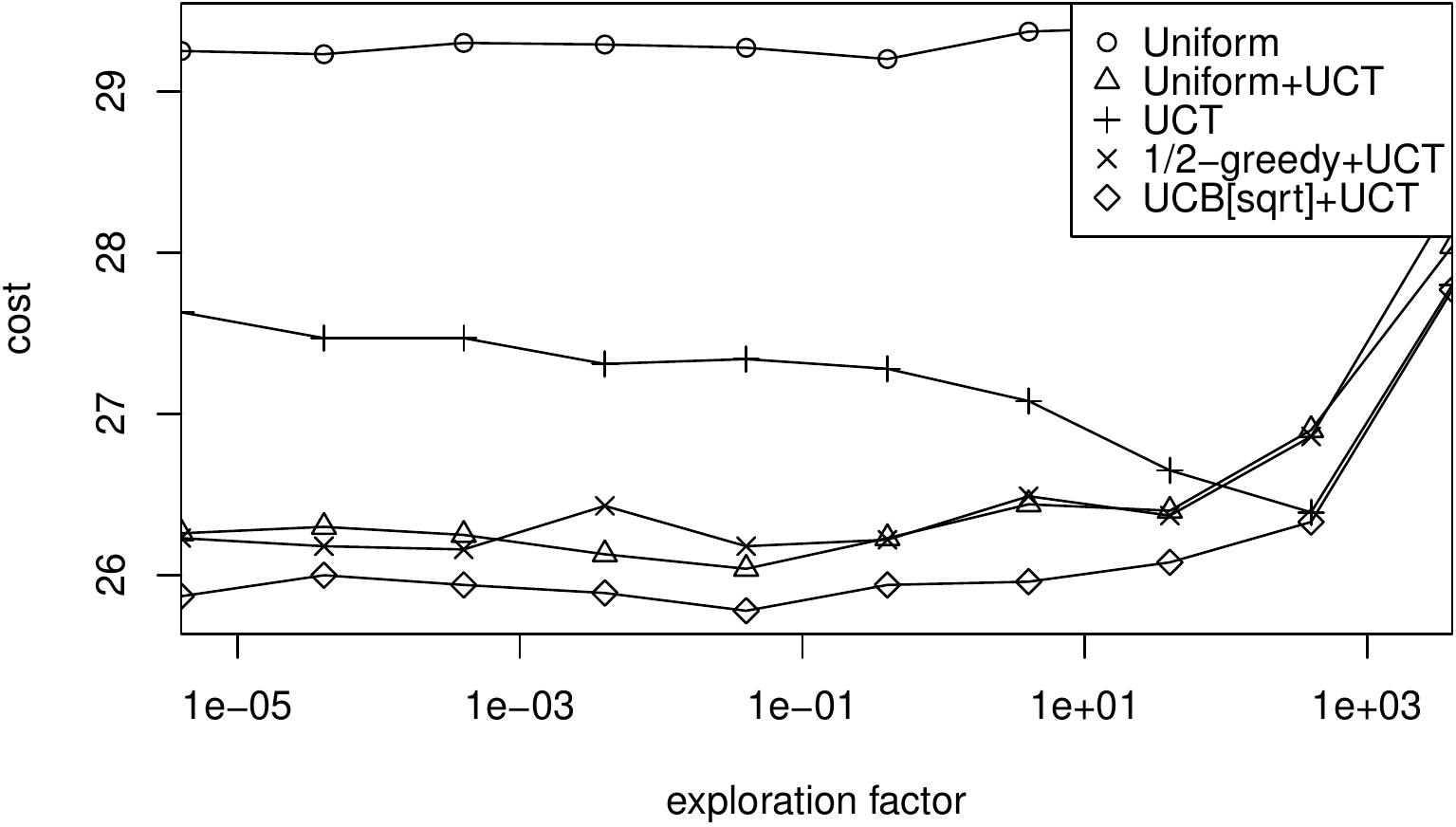}\\
  a. 397 rollouts\\
  \vspace{1em}
  \includegraphics[scale=0.45]{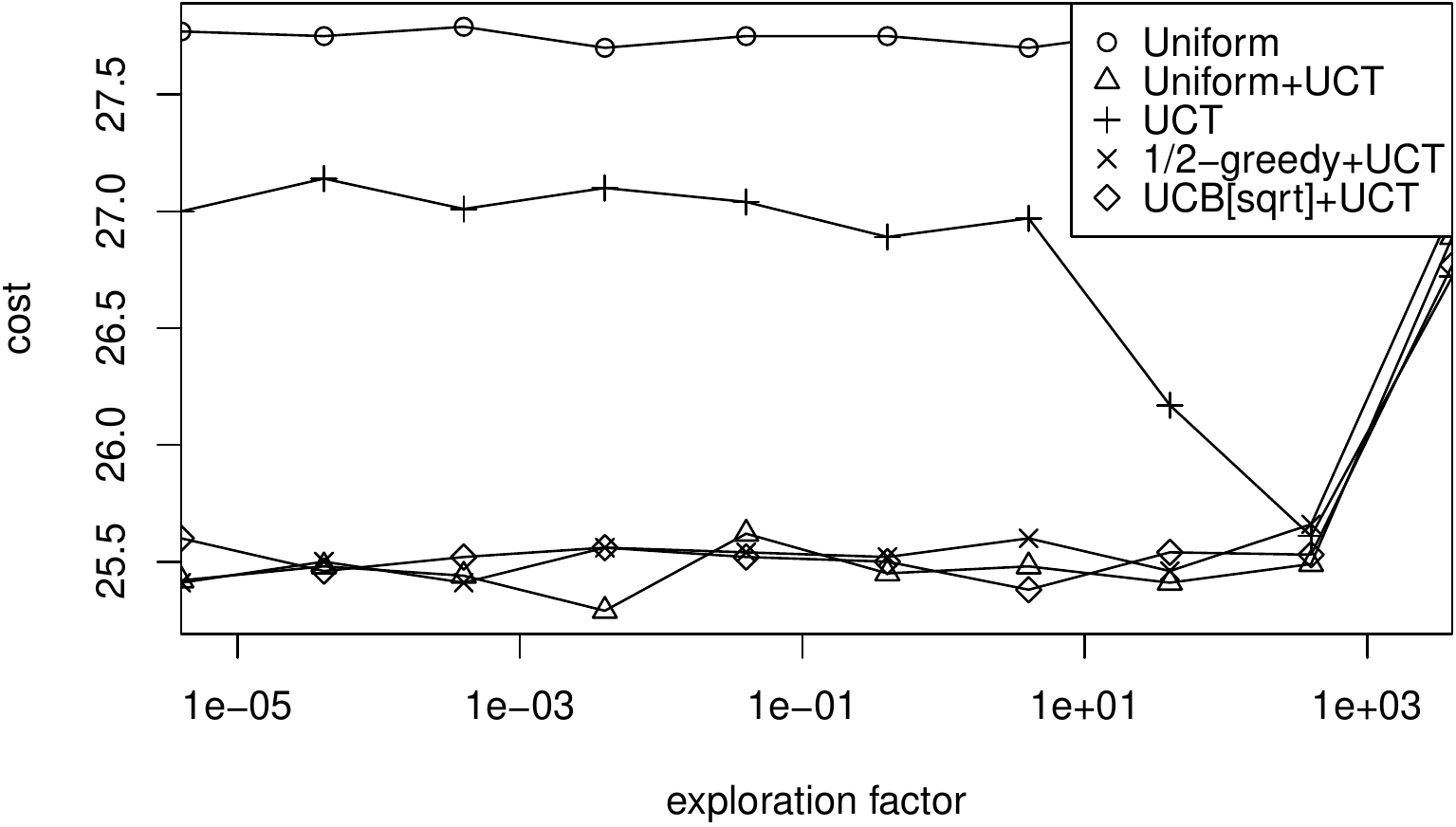}\\
  b. 1585 rollouts
  \caption{The sailing domain, $6\times 6$ lake, cost vs. factor}
  \label{fig:sailing-cost-vs-factor}
\end{figure}

Figure~\ref{fig:sailing-lake-size} shows the cost vs. the exploration
factor for lakes of different sizes. The relative difference between
the sampling schemes becomes more prominent when the lake size
increases.
\begin{figure}[h!]
   \centering
   \includegraphics[scale=0.45]{costs-size=6-nsamples=397.pdf}\\
   a. $6\times 6$ lake \\
   \vspace{1em}
   \includegraphics[scale=0.45]{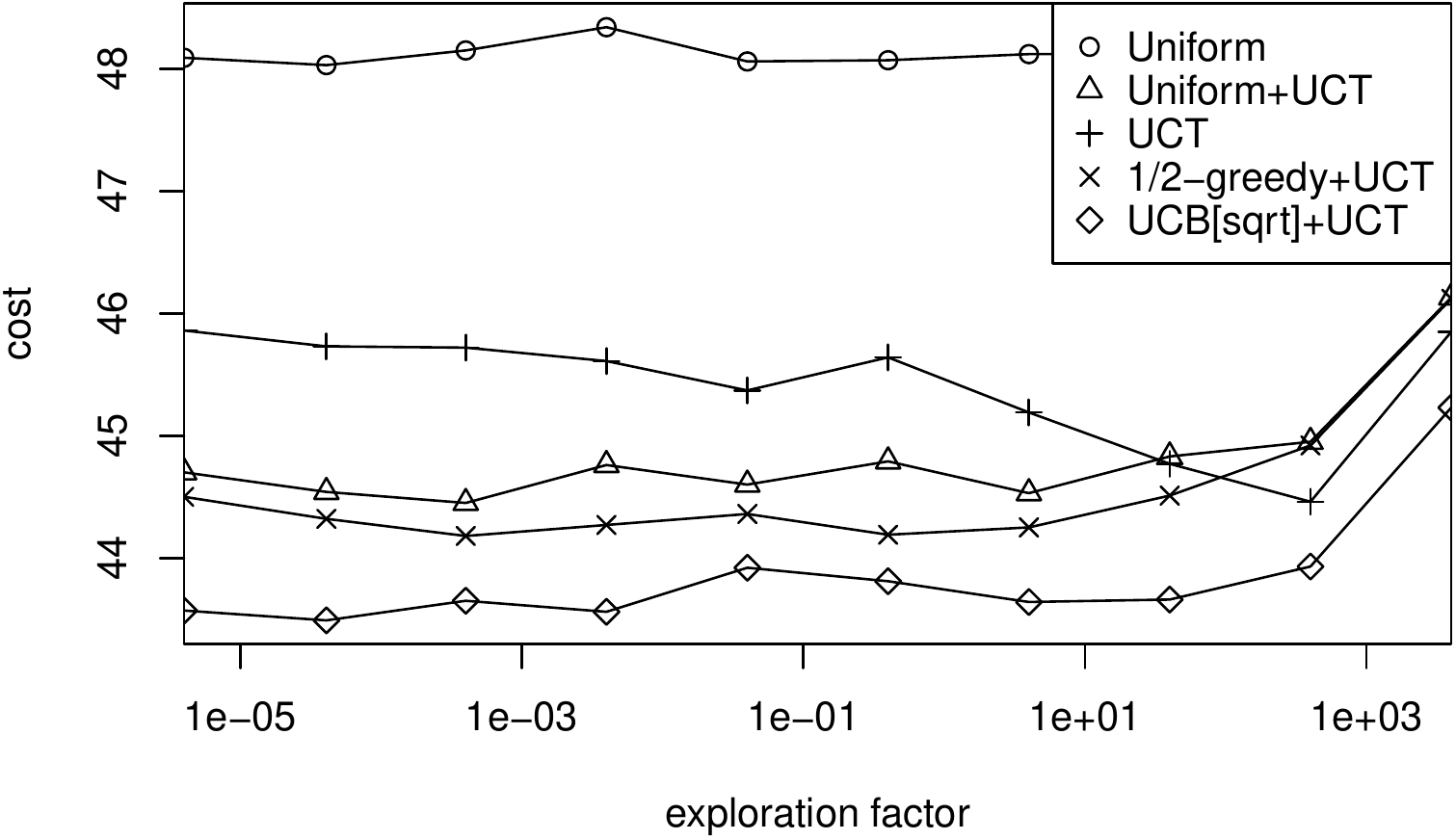}\\
   b. $10\times 10$ lake
  \caption{The sailing domain, 397 rollouts, cost vs. factor}
  \label{fig:sailing-lake-size}
\end{figure}

\subsection{VOI-aware MCTS}

Finally, the VOI-aware sampling scheme was empirically compared to
other sampling schemes (UCT, $\frac 1 2$-greedy+UCT,
UCT$_{\sqrt{\cdot}}$+UCT). Again, the experiments were performed on
randomly generated trees with structure shown in
Figure~\ref{fig:mcts-regret}.a. Figure~\ref{fig:mcts-regret-voi} shows
the results for 32 arms. VOI+UCT, the scheme based on a VOI estimate,
outperforms all other sampling schemes in this example. Similar performance improvements
(not shown) also occur for the sailing domain.
\begin{figure}[h!]
  \centering
  \includegraphics[scale=0.45]{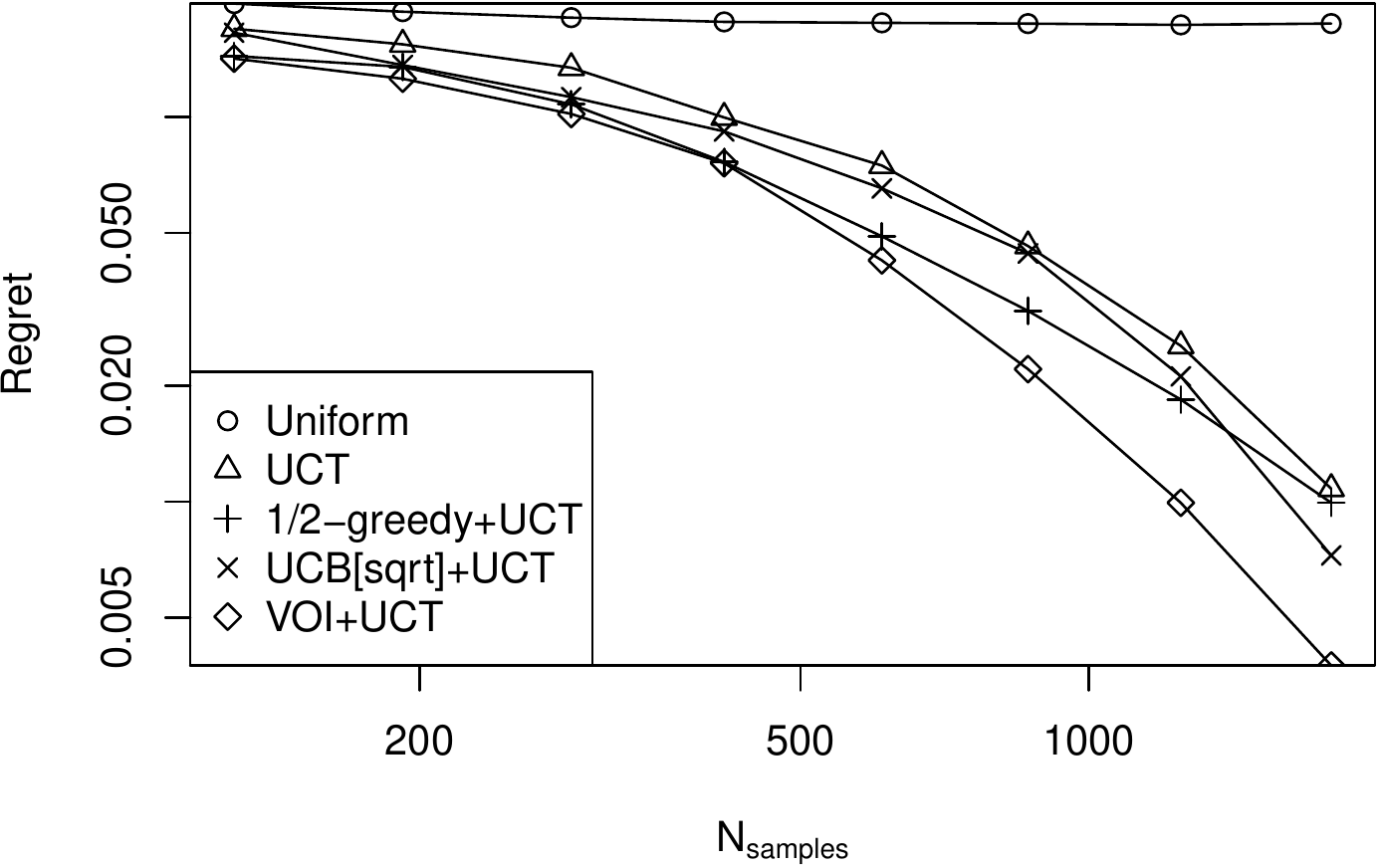}\\
  \caption{MCTS in random trees, including VOI+UCT.}
  \label{fig:mcts-regret-voi} 
\end{figure}

\section{Conclusion and Future Work}
\label{sec:summary}

UCT-based Monte-Carlo tree search has been shown to be very effective
for finding good actions in both MDPs and adversarial games.
Further improvement of the sampling scheme is thus of interest in
numerous search applications. We argue that although UCT is already very efficient,
one can do better if the sampling scheme is considered from a metareasoning
perspective of value of information (VOI).

The MCTS SR+CR scheme presented in the paper differs
from UCT mainly in the first step of the rollout, when we attempt to minimize
the `simple' selection regret rather than the cumulative regret. Both the
theoretical analysis and the empirical evaluation provide evidence for
better general performance of the proposed scheme.

Although SR+CR is inspired by the notion of VOI,
the VOI is used there implicitly in the analysis of the algorithm,
rather than computed or learned explicitly in order to plan the
rollouts. Ideally, using VOI to control sampling ab-initio should do even better,
but the theory for doing that is still not up to speed. Instead we suggest
a  ``VOI-aware'' sampling scheme based on crude probability and value estimates,
which despite its simplicity already shows a marked improvement in minimizing regret.
However, application of the theory of rational metareasoning
to Monte Carlo Tree Search is an open problem \cite{HayRussell.MCTS},
and both a solid theoretical model and empirically efficient VOI
estimates need to be developed. 

Finding a better sampling scheme for non-root nodes,
as well as the root node, should also be possible.
Although cumulative regret does reasonably
well there, it is far from optimal, as meta-reasoning principles imply that an optimal scheme
for these nodes must be asymmetrical (e.g. it is not helpful to find out that the
value of the current best action is even better than previously believed).

Finally, applying VOI methods in complex deployed applications that already use
MCTS is a challenge that should be addressed in future work.
In particular, UCT is extremely successful in Computer
Go \cite{GellyWang.mogo,Braudis.pachi,Enzenberger.Fuego},
and the proposed scheme should be evaluated on this domain. This is non-trivial, since
Go programs typically use ``non-pure'' versions of UCT, extended with
domain-specific knowledge. For example, Pachi \cite{Braudis.pachi}
typically re-uses information from rollouts generated for earlier
moves, thereby violating our underlying assumption that information is
only used for selecting the current move.  In early experiments not
shown here (disallowing re-use of samples, admittedly not really a
fair comparison) the VOI-aware scheme apears to dominate UCT.
Nevertheless, it should also be possible to adapt the VOI-aware
schemes to take into account expected re-use of samples, another topic
for future research.

\section*{Acknowledgments}

The research is partially supported by Israel
Science Foundation grant 305/09, by the Lynne and William Frankel
Center for Computer Sciences, and by the Paul Ivanier Center for
Robotics Research and Production Management.

\bibliographystyle{aaai}
\bibliography{refs}

\end{document}